\documentclass[runningheads]{llncs}
\usepackage[T1]{fontenc}
\usepackage{amsfonts}
\usepackage{graphicx}
\usepackage{amsmath}
\usepackage{amssymb}
\usepackage{algorithm}
\usepackage{algpseudocode}
\usepackage{booktabs}
\usepackage{multirow}
\usepackage{tikz}
\usetikzlibrary{shapes.geometric, arrows.meta}

\begin{document}
\title{Provable Limits and Certified Deferral for Verbalized
Uncertainty in Small Language Models}
\titlerunning{Verbalized Uncertainty in Small LMs}

\author{Jianru Shen\orcidID{0009-0000-3546-9616}}
\authorrunning{J. Shen}
\institute{University of Montana, Missoula, MT 59812, USA \\
}

\maketitle              
\begin{abstract}
Small open-weight language models increasingly run in private,
offline, and cost-sensitive settings, where the key deployment
question is not only what a model answers but when it should defer
to a human. We study whether verbalized confidence can support
risk-controlled deferral, evaluating eleven instruction-tuned models
from three families, 0.5B to 14B parameters, on ARC-Challenge and
TruthfulQA with 25,168 local predictions. Three theoretical results
delimit what calibration can provide: strictly monotone calibration
preserves the risk--coverage frontier and error-detection AUROC;
temperature scaling cannot calibrate models whose confidence stays
above one half while accuracy falls below it; and a Clopper--Pearson
procedure converts a 200-question calibration set into a
finite-sample risk certificate under an i.i.d.\ deployment
assumption. Empirically, eight of 22 model--task pairs hit the
temperature-scaling infeasibility floor within one percentage point
of the predicted bound. Platt scaling reduces ECE to as low as 0.02,
yet certified autonomy at a 20\% risk budget is granted to only
three model--task pairs and to none at 10\%. We also identify and
repair an answer-ordering artifact in the multiple-choice form of
TruthfulQA. Calibration gives confidence semantics; certified
deferral determines when small models are safe to use.

\keywords{Verbalized uncertainty \and Confidence calibration \and
Small language models \and Selective prediction \and Risk control
\and Interpretable deferral}
\end{abstract}
%
%
%

\section{Introduction}\label{sec:intro}

Small open-weight language models increasingly run where frontier
models cannot: on private infrastructure for confidential data, on
edge devices without connectivity, and in cost-sensitive pipelines
that cannot afford API calls per query. Precisely in this regime the
models are weakest, so the central deployment question is not only
what a model answers but when it should not answer at all. A
human-in-the-loop system needs a trustworthy confidence per
prediction, a threshold separating autonomous answers from
deferrals, and a reason to believe the resulting error rate stays
within a stated budget.

Verbalized confidence, an integer the model states alongside its
answer, is the only confidence signal available through a bare text
interface, and prior work shows it correlates with correctness for
large models~\cite{kadavath2022,tian2023}. Whether it can support
risk-controlled deferral at 0.5B to 14B parameters is unknown.
Post-hoc calibration is well studied for
classifiers~\cite{guo2017calibration}, and selective prediction
formalizes abstention~\cite{geifman2017}, but three gaps separate
these tools from a deployable system: calibration quality across
small-model families and task types is uncharted, the relation
between calibration and threshold-based deferral is usually left
implicit, and empirical error rates are reported where deployment
needs guarantees.

We close these gaps with a framework coupling verbalized confidence
extraction, post-hoc calibration, and certified threshold selection,
characterized across eleven models from three families on
ARC-Challenge and TruthfulQA. Along the way we identify and repair
an answer-ordering artifact in the multiple-choice form of
TruthfulQA that inflates letter-scored accuracy for position-biased
models.

Our contributions are threefold. First, a risk-controlled deferral
formulation for verbalized confidence, with propositions proving
that strictly monotone calibration preserves the risk--coverage
frontier and a corollary giving an exact infeasibility bound for
temperature scaling that our measurements meet to within one
percentage point. Second, a certified threshold-selection procedure
based on Clopper--Pearson bounds with a union-bound correction for
data-dependent selection. Third, a systematic study of eleven
open-weight models from 0.5B to 14B across three families on two
contrasting tasks, together with an answer-ordering audit for
TruthfulQA, threshold-transfer and elicitation-format ablations, and
transparent per-example deferral decisions that translate the theory
into deployment guidance.

\section{Related Work}\label{sec:related}

\paragraph{Confidence estimation for language models.}
Token probabilities and semantic uncertainty provide direct
uncertainty signals~\cite{kuhn2023,wang2023,farquhar2024detecting}
but require white-box access or multiple generations that deployed
text interfaces rarely afford. Verbalized confidence instead asks
the model to state its own uncertainty: large models can assess
their answers when prompted~\cite{kadavath2022}, prompting and
fine-tuning improve stated
confidence~\cite{tian2023,lin2022teaching}, and surveys document
both its promise and its
overconfidence~\cite{xiong2024,geng2024survey}. Much of this literature studies larger models or commercial APIs;
we ask whether the signal is usable at 0.5B to 14B, the scale of
local deployment.

\paragraph{Post-hoc calibration.}
Temperature scaling~\cite{guo2017calibration}, Platt
scaling~\cite{platt1999}, and isotonic
regression~\cite{zadrozny2002} are the standard post-hoc maps, with
binned ECE the standard quality measure~\cite{naeini2015}, and are
usually compared by calibration error alone. We compare them by
what they preserve for downstream deferral: strictly monotone maps
leave the risk--coverage frontier unchanged, isotonic ties coarsen
the achievable coverage grid, and Platt's bias term is exactly the
degree of freedom that separates feasible from infeasible
calibration on overconfident small models.

\paragraph{Selective prediction and learning to defer.}
Selective classification equips a model with a reject
option~\cite{geifman2017selectivenet}, with guaranteed-risk variants
bounding the error of the retained set~\cite{geifman2017}; learning
to defer trains the policy jointly against a human
expert~\cite{mozannar2020,madras2018}, requiring expert labels and
retraining. Conformal methods give finite-sample coverage guarantees
for prediction sets~\cite{angelopoulos2023conformal} and extend to
language generation~\cite{quach2024conformal}; our certificate
instead bounds the conditional risk of a threshold-based
answer/defer policy. Our setting is post hoc: the model is fixed,
only a small calibration set is available, and the threshold must
carry a finite-sample guarantee~\cite{geifman2017}, issued per task,
since thresholds do not transfer across tasks.

\section{Method}\label{sec:method}
Figure~\ref{fig:overview} summarizes the certified deferral
pipeline. Our framework has three components: verbalized confidence
extraction, post-hoc calibration, and risk-controlled threshold
selection with a finite-sample certificate. We formalize each in
turn.

\begin{figure}[t]
\centering
\includegraphics[width=\textwidth]{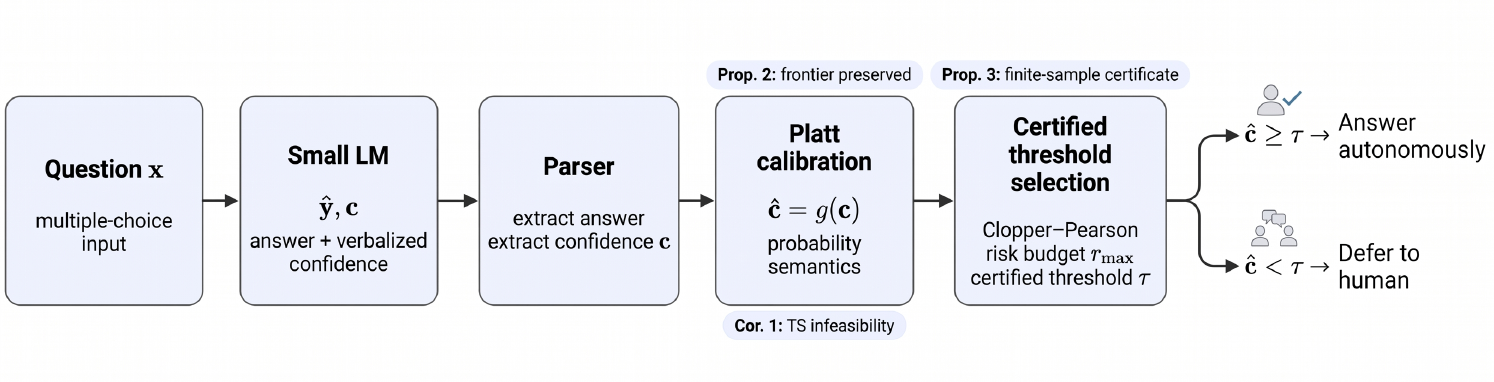}
\caption{Overview: a small LM emits an answer with verbalized
confidence $c$; Platt calibration supplies probability semantics;
a Clopper--Pearson certificate selects the threshold $\tau$; the
policy answers or defers.}
\label{fig:overview}
\end{figure}

\subsection{Problem Formulation}\label{sec:problem}

Let $\mathcal{D}=\{(x_i,y_i)\}_{i=1}^{N}$ be a set of multiple-choice
questions, where $x_i$ is a question with its answer options and
$y_i$ is the correct option label. A language model $f$ maps each
$x_i$ to a predicted label $\hat y_i$ and a raw confidence
$c_i\in[0,1]$, obtained by dividing the verbalized integer score by
100. A calibration map $g:[0,1]\to[0,1]$ produces the calibrated
confidence $\hat c_i=g(c_i)$, with the goal that $\hat c_i$
approximate the probability that $\hat y_i$ is correct. The map $g$
is fit on a held-out calibration set
$\mathcal{D}_{\mathrm{cal}}$, disjoint from the test set on which all
quantities below are reported.

A deferral policy with threshold $\tau\in[0,1]$ answers autonomously
when $\hat c_i\ge\tau$ and defers to a human otherwise. Its empirical
coverage and empirical risk on a set of $N$ examples are
\begin{equation}
\mathrm{Cov}(\tau)=\frac{|\{i:\hat c_i\ge\tau\}|}{N},
\qquad
\mathrm{Risk}(\tau)=\frac{\sum_{i:\hat c_i\ge\tau}
\mathbf{1}[\hat y_i\ne y_i]}{|\{i:\hat c_i\ge\tau\}|},
\label{eq:covrisk}
\end{equation}
with $\mathrm{Risk}(\tau)$ undefined when no example is retained. We
write $R(\tau)=\Pr[\hat y\ne y\mid \hat c\ge\tau]$ for the true
conditional risk under the deployment distribution;
$\mathrm{Risk}(\tau)$ is its finite-sample estimate, and the
certificate of Sect.~\ref{sec:theory} bounds $R(\tau)$ itself. A
deployment specifies a maximum tolerable risk $r_{\max}\in(0,1)$.

\subsection{Verbalized Confidence}\label{sec:extraction}

We elicit $c_i$ directly in text, requesting an answer letter and an
integer score in one fixed format. This requires only generation
access, applies to any model behind a text interface, and costs a
single forward pass per question; token-level probabilities are often
unavailable in deployed settings and are not used. The prompt and the
two-stage parser are described in Sect.~\ref{sec:setup}.

\subsection{Post-hoc Calibration}\label{sec:calibration}

We compare three standard calibration maps, each fit on
$\mathcal{D}_{\mathrm{cal}}$. Let
$\mathrm{logit}(c)=\log\bigl(c/(1-c)\bigr)$ and
$\sigma(z)=1/(1+e^{-z})$ denote the logit and sigmoid functions.
Temperature scaling~\cite{guo2017calibration} applies
\begin{equation}
\hat c=\sigma\!\bigl(\mathrm{logit}(c)/T\bigr),
\label{eq:temp}
\end{equation}
where the calibration temperature $T>0$ is chosen to minimize ECE on
$\mathcal{D}_{\mathrm{cal}}$ over $T\in[0.1,100]$. Platt
scaling~\cite{platt1999} fits a logistic regression
$\hat c=\sigma\bigl(a\,\mathrm{logit}(c)+b\bigr)$ with slope $a$ and
bias $b$; the bias is the one degree of freedom temperature scaling
lacks. The fitted slope is positive in all 22 model--task pairs of our
study, so every fitted Platt map is strictly increasing; the induced
confidence ordering, and with it every threshold-induced retained
set, coincides with that of the raw scores. All guarantees below
that require strict monotonicity therefore apply to the Platt maps
we deploy. Isotonic regression~\cite{zadrozny2002} fits the best
non-decreasing step function from $c$ to correctness. Calibration
quality is measured by the Expected Calibration
Error~\cite{naeini2015}
\begin{equation}
\mathrm{ECE}=\sum_{b=1}^{B}\frac{n_b}{N}\,
\bigl|\mathrm{acc}(b)-\mathrm{conf}(b)\bigr|,
\label{eq:ece}
\end{equation}
where predictions are grouped into $B=10$ equal-width bins by
$\hat c$, bin $b$ holds $n_b$ predictions, and $\mathrm{acc}(b)$ and
$\mathrm{conf}(b)$ are its mean correctness and mean confidence.

\subsection{Threshold Selection}\label{sec:riskcov}

Given $r_{\max}$, the threshold should maximize automation subject to
the risk budget:
\begin{equation}
\tau^\ast=\operatorname*{arg\,max}_{\tau\in[0,1]}\;
\mathrm{Cov}(\tau)
\quad\text{s.t.}\quad
\mathrm{Risk}(\tau)\le r_{\max}.
\label{eq:threshold}
\end{equation}

\begin{proposition}[Minimal feasible threshold]
\label{prop:minfeasible}
Let $\mathcal{F} = \{\tau : \mathrm{Risk}(\tau) \le r_{\max}\}$ be
the feasible set. If $\mathcal{F} \neq \emptyset$, then
$\tau^{*} = \min \mathcal{F}$ solves \eqref{eq:threshold}. If
$\mathcal{F} = \emptyset$, no threshold meets the budget and the
policy returns no autonomous operating point, deferring every
input. The claim does not require $\mathrm{Risk}(\tau)$ to be
monotone; $\mathcal{F}$ may be a union of intervals.
\end{proposition}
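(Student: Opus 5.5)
The key fact is that $\mathrm{Cov}(\tau)$ is non-increasing in $\tau$. So among feasible thresholds the smallest one retains the most examples, and no monotonicity of $\mathrm{Risk}$ is needed. I would first establish this: if $\tau\le\tau'$ then $\{i:\hat c_i\ge\tau'\}\subseteq\{i:\hat c_i\ge\tau\}$, hence $\mathrm{Cov}(\tau)\ge\mathrm{Cov}(\tau')$. This uses only the definition \eqref{eq:covrisk}.

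The main obstacle is showing that $\min\mathcal{F}$ exists, since $\mathcal{F}\subseteq[0,1]$ is an arbitrary union of intervals and need not look closed a priori. I would handle it through the finite structure of the problem. Write $v_1<\dots<v_m$ for the distinct values of $\hat c_i$ on the $N$ examples, and set $v_0=0$ when we need a left endpoint. The retained set $\{i:\hat c_i\ge\tau\}$, and therefore both $\mathrm{Cov}(\tau)$ and $\mathrm{Risk}(\tau)$, is constant on each piece $[0,v_1]$, $(v_{k-1},v_k]$, and $(v_m,1]$. On the last piece no example is retained, so $\mathrm{Risk}$ is undefined there and those $\tau$ are not in $\mathcal{F}$. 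It follows that $\mathcal{F}$ is a union of some of the left-open pieces $(v_{k-1},v_k]$, possibly together with $[0,v_1]$. As written, this makes $\min\mathcal{F}$ fail to exist: its infimum is some $v_{k-1}$ that does not itself belong to $\mathcal{F}$.

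To repair this, I would read ``$\min\mathcal{F}$'' over the effective threshold grid. Every feasible $\tau$ in piece $k$ induces the same policy as $\tau=v_k$. So I replace $\mathcal{F}$ by its representative set $\mathcal{F}\cap\{v_1,\dots,v_m\}$, which is finite and nonempty whenever $\mathcal{F}\neq\emptyset$, and therefore has a minimum. Equivalently, one may say the minimum is attained up to policy equivalence. I would state this convention explicitly in the proof.

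With existence settled, optimality follows directly. Let $\tau^*$ be the minimal representative and let $\tau$ be any element of $\mathcal{F}$. Then $\tau$ induces the same retained set as some grid point $v_k\ge\tau^*$. Monotonicity of coverage gives $\mathrm{Cov}(\tau)=\mathrm{Cov}(v_k)\le\mathrm{Cov}(\tau^*)$, and $\tau^*$ is feasible, so $\tau^*$ attains the maximum in \eqref{eq:threshold}.

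If instead $\mathcal{F}=\emptyset$, the constraint set of \eqref{eq:threshold} is empty, so no threshold meets the budget, and the policy defaults to deferring every input, which is the second claim. Finally, I would note that the argument never compared $\mathrm{Risk}$ values across different thresholds; it used only feasibility membership and coverage monotonicity. That is why the result holds even when $\mathcal{F}$ is a union of intervals.
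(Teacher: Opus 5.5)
Your proof is correct. Its optimality step is the same as the paper's: nested retained sets make $\mathrm{Cov}$ non-increasing, so the smallest feasible threshold has maximal coverage, and only membership in $\mathcal{F}$ is ever used.

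The real difference is the existence step, and there you are more careful than the paper. The paper says the minimum is attained ``because on a finite sample $\mathrm{Risk}$ is piecewise constant with finitely many jumps.'' Piecewise constancy does not give attainment here. Retention is defined by $\hat c_i\ge\tau$, so the pieces are $(v_{k-1},v_k]$, open on the left. If the leftmost feasible piece is $(v_{k-1},v_k]$ with $k\ge 2$, then $\inf\mathcal{F}=v_{k-1}\notin\mathcal{F}$. For instance, one wrong example at $\hat c=0.3$ and one correct example at $\hat c=0.7$ give $\mathcal{F}=(0.3,0.7]$ for $r_{\max}=0.2$, which has no minimum.

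Your restriction to $\mathcal{F}\cap\{v_1,\dots,v_m\}$ fixes this. That set is nonempty exactly when $\mathcal{F}$ is, because each feasible piece contains its right endpoint. It also matches the convention the paper itself adopts in Prop.~\ref{prop:cp}, where $\hat\tau$ is a minimum over the finite grid $\mathcal{T}$.

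One small overstatement on your side: when $[0,v_1]\subseteq\mathcal{F}$ the literal minimum does exist (it is $0$). So $\min\mathcal{F}$ \emph{can} fail to exist rather than always failing. Your representative $v_1$ induces the same policy in that case, so the argument is unaffected.
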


\begin{proof}
$\mathrm{Cov}(\tau)$ is a non-increasing step function of $\tau$, so
for any feasible $\tau$,
$\mathrm{Cov}(\min\mathcal{F})\ge\mathrm{Cov}(\tau)$. The minimum is
attained because on a finite sample $\mathrm{Risk}$ is piecewise
constant with finitely many jumps. \qed
\end{proof}

\subsection{Theoretical Properties}\label{sec:theory}

\begin{proposition}[Ranking invariance]\label{prop:invariance}
Let $g$ be strictly increasing and $\hat c_i=g(c_i)$. Then the set of
achievable operating points
$\{(\mathrm{Cov}(\tau),\mathrm{Risk}(\tau)):\tau\in[0,1]\}$ is
identical under $c$ and $\hat c$. In particular temperature scaling
\eqref{eq:temp}, strictly increasing for every $T>0$, leaves the
risk--coverage frontier and the error-detection AUROC unchanged.
\end{proposition}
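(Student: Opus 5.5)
The plan is to show that every threshold-induced retained set under $\hat c$ is also a threshold-induced retained set under $c$, and conversely. Since $\mathrm{Cov}$ and $\mathrm{Risk}$ in \eqref{eq:covrisk} depend on $\tau$ only through the retained set $S(\tau)=\{i:\hat c_i\ge\tau\}$, this gives equality of the two sets of operating points. Both confidence vectors are finite samples. So the family of retained sets under $c$ is exactly the family of upper sets $U_t=\{i:c_i\ge t\}$ for $t$ ranging over the observed values $c_{(1)}\le\dots\le c_{(N)}$, together with the empty set obtained for $t$ above the maximum. The analogous statement holds for $\hat c$.

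The key step is the order-embedding property. Because $g$ is strictly increasing, $c_i\ge c_j\iff g(c_i)\ge g(c_j)$, so $g$ preserves both the ordering and the ties of the sample. Hence $\{i:\hat c_i\ge g(t)\}=\{i:c_i\ge t\}$ for every observed value $t=c_j$, since $g(c_i)\ge g(c_j)\iff c_i\ge c_j$. Thus every nonempty retained set under $c$ is realized under $\hat c$ by $\tau=g(c_j)\in[0,1]$. Conversely, any $\tau\in[0,1]$ with $S(\tau)\neq\emptyset$ yields $S(\tau)=\{i:c_i\ge c_j\}$, where $c_j$ is the sample value minimizing $\hat c_j$ subject to $\hat c_j\ge\tau$. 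This identification uses the same equivalence together with the fact that no $\hat c_i$ lies strictly between $\tau$ and $\hat c_j$ without being $\ge\hat c_j$.

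The main subtlety, and the step I expect to need care, is the empty retained set and the range restriction $\tau\in[0,1]$. The empty set gives $\mathrm{Cov}=0$ with $\mathrm{Risk}$ undefined, and it must be attainable on both sides. I would handle this by treating the empty set as attainable whenever $\max_i c_i<1$ (and likewise for $\hat c$), or, more cleanly, by restricting the statement to operating points with nonzero coverage. The latter is the frontier that matters for Proposition~\ref{prop:minfeasible}, and I would note this convention explicitly.

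For the remaining claims I would argue as follows. Temperature scaling is $\sigma\circ(\cdot/T)\circ\mathrm{logit}$, a composition of strictly increasing maps for $T>0$, with endpoint values $0,1$ mapped to themselves. It is therefore strictly increasing on $[0,1]$, and the first part applies. For AUROC, I would write the error-detection AUROC as the Mann--Whitney statistic $\frac{1}{|E||C|}\sum_{i\in E,j\in C}\bigl(\mathbf{1}[s_i<s_j]+\tfrac12\mathbf{1}[s_i=s_j]\bigr)$ over errors $E$ and correct predictions $C$, with score $s$ equal to the confidence. Each indicator is invariant under strictly increasing $g$ by the equivalence above, including ties, so the AUROC is unchanged. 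The frontier, defined as the Pareto-optimal subset of the operating-point set, is then identical, since that set itself is.
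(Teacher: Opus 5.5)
Your proof is correct and is essentially the paper's: both reduce the claim to showing that the families of threshold-induced retained sets under $c$ and $\hat c$ coincide, since coverage and risk depend on $\tau$ only through the retained set. Your version is more careful than the paper's one-line argument in three places. The paper maps thresholds through the generalized inverse $\inf\{c:g(c)\ge\tau\}$, whereas you threshold at the smallest retained sample value, which avoids the boundary case where that infimum is not attained because $g$ jumps. Your nonzero-coverage caveat handles a real edge case, namely $\max_i c_i=1$ with $g(1)<1$. Your Mann--Whitney argument supplies the AUROC invariance that the paper leaves implicit.
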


\begin{proof}
With $g^{-1}(\tau)=\inf\{c:g(c)\ge\tau\}$, strict monotonicity gives
$\{i:g(c_i)\ge\tau\}=\{i:c_i\ge g^{-1}(\tau)\}$. Every retained set
under $\hat c$ is a retained set under $c$ and conversely, and
coverage and risk depend only on the retained set. \qed
\end{proof}

\begin{remark}\label{rem:semantics}
Calibration supplies semantics, not discrimination: after
calibration $\tau=0.9$ reads as retained accuracy near 90\,\%, so
thresholds can be set from domain risk tolerances without labeled
deployment data, though only per task
(Sect.~\ref{sec:ablation}). The proposition also separates the two
calibrators: Platt maps are strictly increasing and preserve every
confidence level, while isotonic ties coarsen the achievable
coverage grid (Sect.~\ref{sec:main}).
\end{remark}

\begin{corollary}[Infeasibility of temperature scaling]
\label{cor:tsbound}
Suppose every raw confidence satisfies $c_i>1/2$ and the model's
accuracy is $\mathrm{acc}<1/2$. Then for every $T>0$ the calibrated
confidences of \eqref{eq:temp} satisfy $\hat c_i>1/2$, and
\begin{equation}
\mathrm{ECE}\;\ge\;\frac{1}{N}\sum_{i=1}^{N}\hat c_i-\mathrm{acc}
\;>\;\tfrac{1}{2}-\mathrm{acc}.
\end{equation}
No temperature can calibrate such a model.
\end{corollary}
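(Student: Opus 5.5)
The argument has three steps: show that temperature scaling keeps every confidence above one half, bound ECE below by the gap between mean confidence and accuracy, and then combine the two.

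\emph{Step 1: confidences stay above one half.} Since $c_i>1/2$ gives $\mathrm{logit}(c_i)>0$, and $T>0$, the scaled logit $\mathrm{logit}(c_i)/T$ is strictly positive. Because $\sigma$ is strictly increasing with $\sigma(0)=1/2$, it follows that $\hat c_i=\sigma(\mathrm{logit}(c_i)/T)>1/2$ for every $i$ and every $T>0$. Nothing here depends on how $T$ is chosen, so the ECE-minimizing temperature over $[0.1,100]$ is covered automatically.

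\emph{Step 2: ECE is at least the global confidence--accuracy gap.} With $n_b$ the bin counts,
\[
\mathrm{conf}(b)\,n_b=\sum_{i\in b}\hat c_i,
\qquad
\mathrm{acc}(b)\,n_b=\sum_{i\in b}\mathbf{1}[\hat y_i=y_i].
\]
The triangle inequality then gives
\[
\mathrm{ECE}=\sum_b \frac{n_b}{N}\bigl|\mathrm{acc}(b)-\mathrm{conf}(b)\bigr|
\;\ge\;\Bigl|\sum_b \frac{n_b}{N}\bigl(\mathrm{conf}(b)-\mathrm{acc}(b)\bigr)\Bigr|
=\Bigl|\frac{1}{N}\sum_{i=1}^{N}\hat c_i-\mathrm{acc}\Bigr|.
\]
This is at least $\frac1N\sum_i\hat c_i-\mathrm{acc}$. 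Empty bins contribute zero, so the bin count $B=10$ and the equal-width binning play no role.

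\emph{Step 3: combine.} By Step 1 the mean $\frac1N\sum_i\hat c_i$ is an average of numbers each strictly greater than $1/2$. It is therefore strictly greater than $1/2$ for $N\ge1$. Substituting this into Step 2 gives the strict inequality $\mathrm{ECE}>\tfrac12-\mathrm{acc}$. Since $\mathrm{acc}<1/2$, this lower bound is strictly positive uniformly in $T$, so no temperature calibrates the model.

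There is no real obstacle. The only point needing care is the convention: $\mathrm{acc}$ and the ECE must be computed on the same $N$ predictions. The hypothesis $c_i>1/2$ must also hold on that same set, which the statement asserts for every raw confidence.
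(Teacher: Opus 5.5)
Your proof is correct and follows the paper's own argument essentially step for step. Positivity of $\mathrm{logit}(c_i)/T$ gives $\hat c_i>1/2$, the triangle inequality collapses the binned ECE to $\bigl|\overline{\hat c}-\mathrm{acc}\bigr|$, and $\overline{\hat c}>1/2>\mathrm{acc}$ then removes the absolute value; your added remark that $\mathrm{acc}$, the ECE, and the hypothesis $c_i>1/2$ must all refer to the same $N$ predictions is a worthwhile precision the paper leaves implicit.
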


\begin{proof}
$c_i>1/2$ gives $z_i=\mathrm{logit}(c_i)>0$, hence
$\sigma(z_i/T)>1/2$ for all $T>0$. By the triangle inequality applied
to \eqref{eq:ece},
$\mathrm{ECE}\ge\bigl|\sum_b \frac{n_b}{N}
(\mathrm{conf}(b)-\mathrm{acc}(b))\bigr|
=\bigl|\overline{\hat c}-\mathrm{acc}\bigr|$, and
$\overline{\hat c}>1/2>\mathrm{acc}$ removes the absolute value. \qed
\end{proof}

\begin{proposition}[Finite-sample risk certificate]\label{prop:cp}
Fix $\tau$ and let the retained calibration examples number $n(\tau)$
with $k(\tau)$ errors among them. If deployment examples are i.i.d.\
from the calibration distribution, then for any $\delta\in(0,1)$,
with probability at least $1-\delta$,
\begin{equation}
R(\tau)\;\le\;
\mathrm{BetaInv}\bigl(1-\delta;\;k(\tau)+1,\;n(\tau)-k(\tau)\bigr),
\label{eq:cp}
\end{equation}
the Clopper--Pearson upper bound. When $\tau$ is selected from the
grid $\mathcal{T}$ of distinct calibration-set confidences, replacing
$\delta$ by $\delta/|\mathcal{T}|$ preserves the guarantee for the
selected threshold
$\hat\tau=\min\{\tau\in\mathcal{T}:
\mathrm{BetaInv}(1-\delta/|\mathcal{T}|;k(\tau)+1,n(\tau)-k(\tau))
\le r_{\max}\}$.
\end{proposition}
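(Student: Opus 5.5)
Fix $\tau$ and reduce the statement to the classical exact binomial upper confidence bound by conditioning on the retained set. Let the calibration examples be i.i.d.\ draws of $(x,y)$. The calibration map $g$ and threshold $\tau$ are treated as fixed; if $g$ was fit on the same data, it is fit on a disjoint split or the guarantee is read conditionally on $g$. Each example then independently falls into the retained set $\{\hat c\ge\tau\}$ with probability $\Pr[\hat c\ge\tau]$. Conditional on the indices retained, and hence on $n(\tau)=n$, the indicators $\mathbf{1}[\hat y_i\ne y_i]$ of the retained examples are i.i.d.\ Bernoulli with parameter $p=R(\tau)=\Pr[\hat y\ne y\mid\hat c\ge\tau]$. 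This holds because the examples are independent and each retained example has the conditional law given $\hat c\ge\tau$. Thus $k(\tau)\mid n(\tau)=n\sim\mathrm{Bin}(n,p)$. The i.i.d.\ deployment assumption makes this same $p$ the deployment conditional risk.

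Next, invoke the Clopper--Pearson property. Let $U(k,n)=\mathrm{BetaInv}(1-\delta;k+1,n-k)$, with the convention $U=1$ when $k=n$ or $n=0$. Through the Beta--binomial identity $\Pr[\mathrm{Bin}(n,p)\le k]=\Pr[\mathrm{Beta}(k+1,n-k)>p]$, $U(k,n)$ is the largest $p$ with $\Pr_p[K\le k]\ge\delta$. Since $U$ is nondecreasing in $k$, the event $U(K,n)<p$ equals $\{K\le k_p\}$, where $k_p$ is the largest $k$ with $U(k,n)<p$. For that $k_p$ we have $\Pr_p[K\le k_p]<\delta$, because $p$ exceeds the largest parameter value at which the tail probability is at least $\delta$ and the binomial CDF is decreasing in $p$. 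Hence $\Pr[U(K,n)\ge p\mid n]\ge1-\delta$ for every $n$. Averaging over $n(\tau)$ gives the unconditional bound \eqref{eq:cp}.

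For the selection step, apply a union bound. The grid $\mathcal{T}$ is data-dependent, which is the main obstacle. The cleanest route is to note that the retained sets induced by thresholds in $\mathcal{T}$ are exactly the nested sets $\{i:\hat c_i\ge\tau\}$, of which there are at most $|\mathcal{T}|$. If the grid were fixed in advance, the event $E=\bigcap_{\tau\in\mathcal{T}}\{R(\tau)\le U_{\delta/|\mathcal{T}|}(\tau)\}$ would have probability at least $1-\delta$ by the union bound over the previous step, each term failing with probability at most $\delta/|\mathcal{T}|$. On $E$, any rule that picks $\hat\tau\in\mathcal{T}$ using the data, in particular the minimal-feasible rule of Proposition~\ref{prop:minfeasible}, inherits $R(\hat\tau)\le U(\hat\tau)\le r_{\max}$.

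To handle the data-dependent grid honestly, I would take $\mathcal{T}$ to be a fixed grid chosen before seeing labels, such as the 101 possible verbalized values $\{0,0.01,\dots,1\}$ mapped through $g$. The calibration confidences occupy a subset of it, and thresholds between realized values induce the same retained sets as the nearest realized one above. Bounding by the fixed grid size then gives a valid, slightly more conservative correction that dominates the stated one. Alternatively, I would argue conditionally on the unlabeled confidences, since the grid depends only on the $\hat c_i$ and not on correctness. This requires exchangeability of errors given confidences, and I would flag it as the delicate point.
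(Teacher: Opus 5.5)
Your argument has the same core as the paper's: condition on the retained indices so that $k(\tau)\mid n(\tau)\sim\mathrm{Bin}(n(\tau),R(\tau))$, use exactness of the one-sided Clopper--Pearson limit, and union-bound over candidate thresholds. The paper only asserts the Clopper--Pearson coverage; you derive it from the Beta--binomial identity.

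The real difference is the selection step. The paper union-bounds over the realized grid $\mathcal{T}$ at level $\delta/|\mathcal{T}|$. In Remark~\ref{rem:reuse} it justifies this because $\mathcal{T}$ and its retained sets are fixed by the raw confidences before any label is seen. That amounts to your second route: treating the grid as fixed given the unlabeled confidences.

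Your fixed-grid version, a union bound over the $101$ attainable verbalized values, is what a union bound over deterministic events actually delivers. It buys full rigor at the price of $\delta/101$ instead of $\delta/|\mathcal{T}|$. So it proves a slightly more conservative statement than the one written.

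You are right to distrust the conditional route, but the obstruction is more basic than exchangeability. Given the confidences, the retained error indicators are independent $\mathrm{Bernoulli}(\eta(c_i))$ with $\eta(c)=\Pr[\hat y\ne y\mid c]$. Hence $k(\tau)$ is Poisson-binomial, and what it estimates is $\frac{1}{n(\tau)}\sum_{i:\hat c_i\ge\tau}\eta(c_i)$, not $R(\tau)$. The exchangeability you mention would force $\eta$ to be constant on the retained levels, i.e.\ confidence with no discriminative content.

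Finally, drop ``read conditionally on $g$'', which is not valid when $g$ is fit on the same labels, and use the paper's observation instead. For strictly increasing $g$, $\{i:g(c_i)\ge g(v)\}=\{i:c_i\ge v\}$. So the retained sets, and hence $n$, $k$ and $R$ along your grid $g(\{0,0.01,\dots,1\})$, do not depend on the fitted map at all.
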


\begin{proof}
Conditional on retaining $n(\tau)$ i.i.d.\ examples, $k(\tau)$ is
binomial with success probability $R(\tau)$, and \eqref{eq:cp} is the
exact one-sided binomial upper confidence limit. A union bound over
the $|\mathcal{T}|$ candidate thresholds covers the data-dependent
choice of $\hat\tau$. \qed
\end{proof}

\begin{remark}\label{rem:reuse}
The same calibration set fits the map and selects the threshold;
this does not invalidate the guarantee. The grid $\mathcal{T}$ and
its retained-set family are fixed by the raw-confidence ordering
before any correctness label is inspected, and a strictly monotone
map leaves every retained set unchanged, so the union bound of
Prop.~\ref{prop:cp} covers the label-dependent choice of $\hat\tau$
on either scale. The argument does not extend to isotonic
regression, whose level sets are learned from labels; a second
reason, beyond the coverage coarsening of Sect.~\ref{sec:main}, to
certify with Platt maps only.
\end{remark}

The certificate upgrades the deployment claim from an observed error
rate to a guarantee that holds with confidence $1-\delta$, in the
spirit of selective prediction with guaranteed
risk~\cite{geifman2017}; it assumes no distribution shift between
calibration and deployment, a limitation we return to in
Sect.~\ref{sec:discussion}. Throughout the experiments we use
$\delta=0.05$.

\section{Experiments}\label{sec:experiments}

\subsection{Setup}\label{sec:setup}

\paragraph{Datasets.}
We evaluate on two multiple-choice benchmarks with contrasting
difficulty profiles: ARC-Challenge~\cite{clark2018arc}, grade-school
science questions rewarding knowledge and reasoning, and the
single-answer multiple-choice form of TruthfulQA~\cite{lin2022truthfulqa},
adversarially built so that common misconceptions point toward wrong
answers. Each dataset contributes a 200-question calibration set; the
remaining 1{,}271 and 617 questions form the ARC and TruthfulQA test
sets. Items carry two to five options, so random-guess baselines vary
per item.

\paragraph{Answer-ordering audit.}
In the single-answer multiple-choice form of TruthfulQA used by our
pipeline, all 817 pooled calibration and test items place the
correct option in the first position, so any letter-scored protocol
rewards positional preference: a constant first-letter policy
attains perfect accuracy. Before any inference we permute the
options of every question with a deterministic shuffle seeded by a
hash of the question text, and normalize 26 ARC items with numeric
labels. After repair the correct-letter distribution is near uniform
(A/B/C/D $=$ 203/223/200/191) and the constant-first policy drops to
24.8\,\%, close to chance. We recommend this audit as a simple
safeguard for letter-scored uses of TruthfulQA.

\paragraph{Models.}
We study eleven open-weight instruction-tuned models across three
families and a 28-fold size range: Qwen2.5 at 0.5B to 14B, Llama~3.2 at
1B and 3B with Llama~3.1 at 8B, and Gemma~3 at 1B to 12B. All run
locally through Ollama in default quantized builds on one MacBook Air
M4 with 32\,GB of memory, using sampling temperature zero, distinct
from the calibration temperature $T$ of Sect.~\ref{sec:calibration}.
The study totals 25{,}168 predictions.

\paragraph{Elicitation, parsing, and metrics.}
A single prompt requests an answer letter and an integer confidence
in a fixed letter-comma-number format. Responses pass a strict
parser and, on failure, a conservative lenient parser that recovers
a valid option letter and, when unambiguous, a confidence value;
refusals are never recovered. Predictions with a recoverable
confidence enter all calibration and deferral analyses; answer-only
recoveries count toward accuracy but not calibration; unrecoverable
responses are treated as automatic deferrals in deployment. Ten of
eleven models exceed 96\,\% strict compliance and 99\,\% usable
confidence on every split, while Llama3.2-1B falls to 27.7\,\%
strict compliance yet retains 71.6\,\% usable confidence, so format
failure and signal absence are distinct failure modes. All calibrators are fit on the
calibration set and evaluated once on the test set. We report
accuracy, Expected Calibration Error (ECE) with ten equal-width
bins, AUROC for error detection, and coverage at certified risk;
formal definitions appear in Sect.~\ref{sec:problem}.

\subsection{Main Results}\label{sec:main}

\begin{table}[t]
\caption{Accuracy, ECE under four calibration treatments,
error-detection AUROC, and certified coverage at $r_{\max}=20\%$
under Platt scaling (Prop.~\ref{prop:cp}). Best ECE per row in
bold; $\dagger$: temperature scaling at the search bound
$T^{*}\!\ge\!100$ (Cor.~\ref{cor:tsbound}); --: no feasible
threshold, a refusal.}
\label{tab:main}
\centering\small
\setlength{\tabcolsep}{3.5pt}
\begin{tabular}{llccccccc}
\hline
\multirow{2}{*}{Model} & \multirow{2}{*}{Data} &
\multirow{2}{*}{Acc.} & \multicolumn{4}{c}{ECE} &
\multirow{2}{*}{AUROC} & \multirow{2}{*}{CertCov (\%)} \\
\cmidrule(lr){4-7}
 & & & uncal. & TS & Platt & Iso & & \\
\hline
\multirow{2}{*}{Qwen2.5-0.5B} & ARC & 33.7 & 0.497 & 0.168$^{\dagger}$ & 0.022 & \textbf{0.019} & 0.522 & -- \\
 & TQA & 22.4 & 0.631 & 0.283$^{\dagger}$ & 0.060 & \textbf{0.050} & 0.477 & -- \\
\hline
\multirow{2}{*}{Qwen2.5-1.5B} & ARC & 66.5 & 0.210 & 0.139 & 0.091 & \textbf{0.084} & 0.492 & -- \\
 & TQA & 42.9 & 0.412 & 0.101$^{\dagger}$ & 0.116 & \textbf{0.071} & 0.495 & -- \\
\hline
\multirow{2}{*}{Qwen2.5-3B} & ARC & 75.0 & 0.229 & \textbf{0.014} & 0.015 & 0.014 & 0.554 & -- \\
 & TQA & 59.0 & 0.389 & \textbf{0.052} & 0.059 & 0.064 & 0.537 & -- \\
\hline
\multirow{2}{*}{Qwen2.5-7B} & ARC & 87.9 & 0.043 & 0.038 & 0.026 & \textbf{0.024} & 0.579 & 93.3 \\
 & TQA & 69.2 & 0.211 & 0.080 & \textbf{0.030} & 0.042 & 0.581 & -- \\
\hline
\multirow{2}{*}{Qwen2.5-14B} & ARC & 91.8 & \textbf{0.023} & 0.033 & 0.033 & 0.028 & 0.655 & 99.8 \\
 & TQA & 78.5 & 0.129 & 0.119 & \textbf{0.071} & 0.101 & 0.629 & 43.6 \\
\hline
\multirow{2}{*}{Llama3.2-1B} & ARC & 26.4 & 0.546 & 0.240$^{\dagger}$ & \textbf{0.070} & 0.070 & 0.531 & -- \\
 & TQA & 24.8 & 0.539 & 0.254$^{\dagger}$ & \textbf{0.073} & 0.082 & 0.503 & -- \\
\hline
\multirow{2}{*}{Llama3.2-3B} & ARC & 63.5 & 0.240 & 0.110 & \textbf{0.040} & 0.049 & 0.533 & -- \\
 & TQA & 41.6 & 0.425 & 0.089$^{\dagger}$ & \textbf{0.036} & 0.036 & 0.468 & -- \\
\hline
\multirow{2}{*}{Llama3.1-8B} & ARC & 75.8 & 0.164 & 0.114 & \textbf{0.027} & 0.035 & 0.538 & -- \\
 & TQA & 50.7 & 0.401 & 0.055 & 0.067 & \textbf{0.067} & 0.519 & -- \\
\hline
\multirow{2}{*}{Gemma3-1B} & ARC & 25.5 & 0.687 & 0.262$^{\dagger}$ & 0.030 & \textbf{0.028} & 0.516 & -- \\
 & TQA & 23.7 & 0.675 & 0.278$^{\dagger}$ & 0.088 & \textbf{0.053} & 0.489 & -- \\
\hline
\multirow{2}{*}{Gemma3-4B} & ARC & 72.6 & 0.224 & 0.038 & 0.031 & \textbf{0.023} & 0.544 & -- \\
 & TQA & 41.2 & 0.541 & 0.104 & \textbf{0.091} & 0.096 & 0.527 & -- \\
\hline
\multirow{2}{*}{Gemma3-12B} & ARC & 89.2 & 0.068 & 0.050 & 0.051 & \textbf{0.048} & 0.509 & -- \\
 & TQA & 67.1 & 0.262 & 0.066 & 0.085 & \textbf{0.028} & 0.640 & -- \\
\hline
\end{tabular}
\end{table}

Table~\ref{tab:main} reports all eleven models on both tasks. Platt
scaling and isotonic regression improve calibration in every row and
dominate temperature scaling throughout, most sharply on small
models: Gemma3-1B on ARC falls from an uncalibrated ECE of 0.687 to
0.262 under TS but to 0.030 under Platt. The TS failures are
structural, not a matter of tuning. Eight model--task pairs drive
the optimal temperature to the search bound, and their residual ECE
sits at the floor predicted by Cor.~\ref{cor:tsbound}: Qwen2.5-0.5B
on TruthfulQA reaches 0.283 against a bound of 0.276, and
Llama3.2-1B 0.254 against 0.252. A single temperature cannot push
confidence below one half, and these models pair near-ceiling verbal
confidence with accuracy near chance; Platt's bias term supplies
exactly the missing degree of freedom.

\paragraph{Platt versus isotonic.}
On ECE the two are close and isotonic wins several rows; the
difference lies in what they preserve. As
Prop.~\ref{prop:invariance} predicts, the strictly increasing Platt
maps retain every distinct confidence level in all 22 model--task
pairs, while isotonic regression collapses an average of 20.2 levels
to 5.3, and to a single level in the worst cases, coarsening the
coverage grid on which threshold-based deferral operates. We
therefore adopt Platt scaling for all deferral experiments.

\paragraph{Discrimination.}
AUROC for error detection is modest everywhere: models below 2B sit
near 0.5, Llama3.2-3B on TruthfulQA falls to 0.468, and the largest
models reach only 0.63 to 0.66. High accuracy does not imply
discrimination, Gemma3-12B attains 89.2\,\% on ARC with an AUROC of
0.509, so verbalized confidence alone is too weak an error signal to
act on directly, motivating the calibrated, certified deferral of
Sect.~\ref{sec:cert}.

\subsection{Scaling and Task Dependence}\label{sec:scaling}

\begin{figure}[t]
\centering
\includegraphics[width=0.80\textwidth]{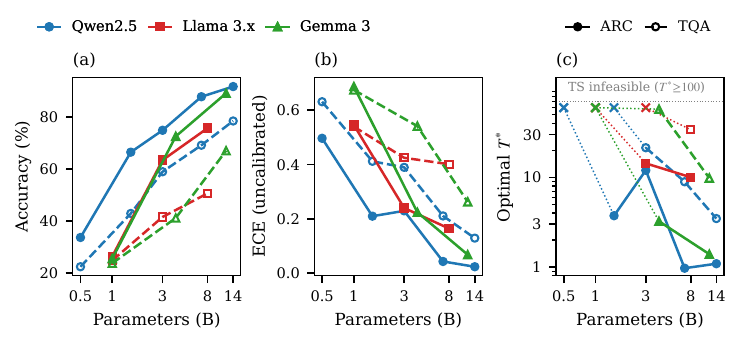}
\caption{Accuracy, uncalibrated ECE, and optimal temperature $T^{*}$
across model size. Solid: ARC; dashed with open markers: TruthfulQA.
In panel~(c), $\times$ marks pairs at the search bound
$T^{*}\!\ge\!100$, joined by dotted connectors to the feasible
segment of the same curve; isolated markers are the only feasible
size in that family--task pair.}
\label{fig:scaling}
\end{figure}

\paragraph{Size and task.}
Accuracy rises monotonically with size within every family
(Fig.~\ref{fig:scaling}a) and uncalibrated ECE falls in step
(Fig.~\ref{fig:scaling}b). TruthfulQA pulls sub-1B models below
their chance baselines, Qwen2.5-0.5B to 22.4\,\% against roughly
26\,\%, confirming that its distractors actively mislead weak
models. At every size the TruthfulQA curve requires stronger
correction than the ARC curve (Fig.~\ref{fig:scaling}c): for
Qwen2.5-7B the optimal temperature is 0.97 on ARC and 8.94 on
TruthfulQA, for Gemma3-4B 3.24 against 57.8. Task type, not model identity alone, sets the
calibration requirement.

\paragraph{Mechanism.}
Median verbalized confidence is nearly constant across sizes,
between 0.80 and 1.00 in every model--task pair. Calibration
improves with scale because accuracy climbs toward a fixed
confidence level, not because larger models hedge; this single fact
explains the mirror symmetry of panels~(a) and~(b).

\subsection{Certified Deferral}\label{sec:cert}

\begin{figure}[t]
\centering
\includegraphics[width=0.75\textwidth]{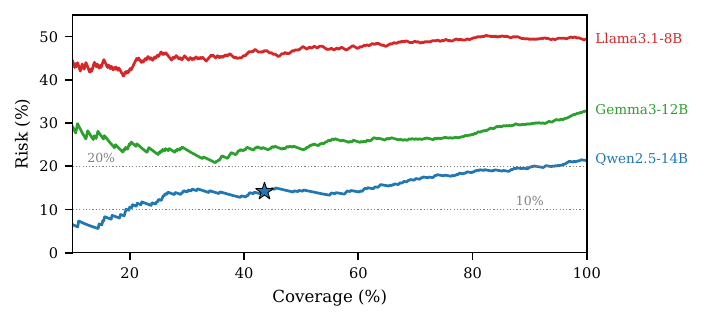}
\caption{Risk--coverage curves on TruthfulQA after Platt calibration
(coverage above 10\,\%). Dotted lines: 10\,\% and 20\,\% risk
budgets. The star is the Qwen2.5-14B operating point certified by
Prop.~\ref{prop:cp} at $r_{\max}=20\%$, plotted at realized test
coverage and risk; Gemma3-12B and Llama3.1-8B receive no certified
threshold.}
\label{fig:riskcov}
\end{figure}

\paragraph{Certification.}
The Clopper--Pearson procedure of Prop.~\ref{prop:cp} grants autonomy
sparingly. At $r_{\max}=20\%$ it certifies Qwen2.5-14B on ARC at
99.8\,\% coverage with a realized test risk of 8.1\,\%, Qwen2.5-7B on
ARC at 93.3\,\% with 10.8\,\%, and Qwen2.5-14B on TruthfulQA at
43.6\,\% with 14.2\,\% (Fig.~\ref{fig:riskcov}); every other pair
receives no certified threshold. At $r_{\max}=10\%$ no pair is certified under the observed
calibration errors and the union-corrected Clopper--Pearson bound,
illustrating the sample-size cost of strict finite-sample
guarantees. Refusal here is an informative output, not a failure mode; we return
to this in Sect.~\ref{sec:discussion}.

\subsection{Ablations}
\label{sec:ablation}

\begin{table}[t]
\caption{Threshold transfer from ARC to TruthfulQA at
$r_{\max}=20\%$. Violation: realized TruthfulQA risk minus
$r_{\max}$, in percentage points. TS uses per-task temperatures.
0-cov: the transferred threshold retains no test example.}
\label{tab:transfer}
\centering\small
\begin{tabular}{lcccc}
\hline
\multirow{2}{*}{Model} & \multicolumn{2}{c}{Uncalibrated} &
\multicolumn{2}{c}{Calibrated (TS)} \\
\cmidrule(lr){2-3}\cmidrule(lr){4-5}
 & Cov.\,(\%) & Viol.\,(pp) & Cov.\,(\%) & Viol.\,(pp) \\
\hline
Qwen2.5-7B & 99.8 & $+10.8$ & 100.0 & $+10.8$ \\
Qwen2.5-14B & 99.8 & $+1.3$ & 100.0 & $+1.5$ \\
Llama3.1-8B & 95.9 & $+29.6$ & 97.9 & $+29.9$ \\
Gemma3-4B & 14.4 & $+30.6$ & 0.0 & 0-cov \\
Gemma3-12B & 99.4 & $+12.6$ & 34.8 & $+0.9$ \\
\hline
\end{tabular}
\end{table}

\paragraph{Ablating calibration granularity.}
On the 14 TS-feasible pairs, a single global temperature gives a
mean test ECE of 0.157, per-task temperatures 0.155, and
per-model-per-task temperatures 0.072. Knowing the task alone buys
almost nothing; knowing the model halves the error. No reliable
shared task-level correction emerges in this setting, so calibration
must be fit per deployment pair.

\paragraph{Ablating the per-task certificate.}
Reusing ARC thresholds on TruthfulQA violates the 20\,\% budget for
every tested model (Table~\ref{tab:transfer}), by 1.3 to 30.6
percentage points before calibration. Per-task calibration repairs
scale mismatch, Gemma3-12B falls from $+12.6$ to $+0.9$ points, but
cannot repair weak discrimination: Llama3.1-8B stays near $+30$, as
Prop.~\ref{prop:invariance} requires, and Gemma3-4B's transferred
threshold retains no examples at all. Safe deployment needs the
per-pair certificates of Sect.~\ref{sec:cert}, not thresholds shared
across tasks or models.

\paragraph{Ablating the elicitation format.}
Re-running three models (Gemma3-1B, Llama3.1-8B, Qwen2.5-14B) on
TruthfulQA with a reworded prompt shifts accuracy by at most four
percentage points and preserves every qualitative finding: median
raw confidence stays at 0.90 to 0.95 against accuracies of 27\,\%
to 78\,\%, uncalibrated ECE remains severe, post-hoc calibration
restores test ECE to at most 0.09, AUROC stays in the weak 0.54 to
0.58 band, and temperature scaling for the 1B model again diverges
to the search bound of Cor.~\ref{cor:tsbound}. Verbalized
overconfidence is a property of the models, not of one prompt.

\subsection{Transparency and Deployment}\label{sec:transparency}

\paragraph{Auditable decisions.}
Every deferral decision reduces to the comparison $\hat c\ge\tau$
against a certified threshold, so each case exposes its
justification as $(c,\hat c,\tau,\mathrm{action})$. For Qwen2.5-14B
on TruthfulQA at the certified $\tau=0.702$, a correct answer with
raw confidence $c=1.00$ is calibrated to $\hat c=0.94$ and answered
autonomously, while a domestic-violence legal question with $c=0.90$
is lowered to $\hat c=0.696<\tau$ and deferred; the model's answer
was in fact wrong. Calibration turns a naive high-confidence answer
into an auditable deferral.

\paragraph{Deployment profile.}
All models ran locally on a Mac M4 machine, with quantized
sizes from 0.4 to 9.0\,GB on disk and mean latency from 0.14 to
2.55\,s per question. Ten of eleven models exceeded 96\,\% strict
format compliance and 99\,\% usable-confidence rate; the exception,
Llama3.2-1B, retained 71.6\,\% usable confidence despite 27.7\,\%
strict compliance, so format failure and confidence absence are
distinct deployment failure modes.

\section{Discussion, Limitations, and Conclusion}\label{sec:discussion}

\paragraph{What calibration buys, and what it cannot.}
By Prop.~\ref{prop:invariance}, calibration cannot improve the
risk--coverage frontier or rescue a model whose confidence carries
little error signal, as Llama3.1-8B shows under transfer. Its value
is semantic: after per-task calibration a threshold reads as a
target accuracy, and the certificate of Prop.~\ref{prop:cp} converts
that reading into a guarantee. Calibration supplies meaning, the
frontier bounds what is achievable, the certificate determines what
is safe.

\paragraph{Refusal as a feature.}
At $r_{\max}=10\%$ the certificate authorizes no model, at 20\,\%
only the strongest Qwen pairs. A system that always returns an
operating point risks deploying models that should not act
autonomously; one that refuses makes the missing capability visible
and returns the decision to the human, the correct default in
high-stakes settings.

\paragraph{Limitations.}
Five limitations bound our claims: both benchmarks are multiple
choice, so open-ended generation remains untested; the certificate
assumes i.i.d.\ deployment and is void under distribution shift; the
200-question calibration sets cap the certifiable regime, as the
empty $r_{\max}=10\%$ outcome shows; models are capped at 14B under
one quantization scheme; and the format-robustness check covers two
elicitation formats on one dataset and three models.

\paragraph{Conclusion.}
We presented a risk-controlled deferral framework coupling
verbalized confidence, post-hoc calibration, and certified threshold
selection, characterized across eleven small open-weight models on
two contrasting benchmarks. Theory and measurement agree, from the
temperature-scaling infeasibility floor met within a percentage
point to the failure of cross-task threshold transfer, and the study
runs on a single consumer laptop. At strict budgets the framework
authorizes autonomy only where evidence supports it; we regard such
calibrated refusal as the correct default for deploying small
language models in decision-critical settings.

\begin{credits}
\subsubsection{\discintname}
The authors have no competing interests to declare that are relevant to the content of this article.
\end{credits}

%
%
%
\bibliographystyle{splncs04}
\bibliography{references}

\end{document}